\newtheorem{proposition}{Proposition}
\newtheorem{corollary}{Corollary}
\newcommand{\colvec}[1]{\left[\begin{array}{c} #1 \end{array}\right]}
\newcommand{\defeq}{\stackrel{\mathrm{def}}{=}}
\renewcommand{\th}[1]{\ensuremath {#1}^{\textrm{th}}}
\renewcommand{\vec}[1]{\ensuremath \overrightarrow{#1}}
\def\eg{\emph{e.g.}~}
\def\ie{\emph{i.e.}~}
\def\etal{\emph{et al.}}
\def\pdd{\ddot{\bfp}}
\def\pd{\dot{\bfp}}
\newcommand{\bfnu}{\boldsymbol{\nu}}
\newcommand{\bfxi}{\boldsymbol{\xi}}
\newcommand{\bfa}{\ensuremath {\bm{a}}}
\newcommand{\bfb}{\ensuremath {\bm{b}}}
\newcommand{\bfg}{\ensuremath {\bm{g}}}
\newcommand{\bfh}{\ensuremath {\bm{h}}}
\newcommand{\bfn}{\ensuremath {\bm{n}}}
\newcommand{\bfp}{\ensuremath {\bm{p}}}
\newcommand{\bft}{\ensuremath {\bm{t}}}
\newcommand{\bfu}{\ensuremath {\bm{u}}}
\newcommand{\bfx}{\ensuremath {\bm{x}}}
\newcommand{\bfz}{\ensuremath {\bm{z}}}
\newcommand{\bfA}{\mathbf{A}}
\newcommand{\bfB}{\mathbf{B}}
\newcommand{\bfC}{\mathbf{C}}
\newcommand{\bfE}{\mathbf{E}}
\newcommand{\bfH}{\mathbf{H}}
\newcommand{\bfL}{\mathbf{L}}
\newcommand{\bfS}{\mathbf{S}}
\newcommand{\calC}{{\cal C}}
\newcommand{\calS}{{\cal S}}
\newcommand{\calZ}{{\cal Z}}
\newcommand{\bbR}{{\mathbb{R}}}
\definecolor{bkgnd-color}{cmyk}{0.0, 0.05, 0.1, 0}
\definecolor{title-color}{cmyk}{0.1, 0.05, 0.0, 0}
\newsavebox{\mysaveboxM} 
\newsavebox{\mysaveboxT} 
\newcommand*\Garybox[2][Example]{%
    \sbox{\mysaveboxM}{#2}%
    \sbox{\mysaveboxT}{\fcolorbox{black}{title-color}{#1}}%
    \sbox{\mysaveboxM}{%
        \parbox[b][\ht\mysaveboxM+.5\ht\mysaveboxT+.5\dp\mysaveboxT][b]{%
        \wd\mysaveboxM}{#2}%
    }%
    \sbox{\mysaveboxM}{%
        \fcolorbox{black}{bkgnd-color}{%
            \makebox[\linewidth-5em]{\usebox{\mysaveboxM}}%
        }%
    }%
    \usebox{\mysaveboxM}%
    \makebox[0pt][r]{%
        \makebox[\wd\mysaveboxM][c]{%
            \raisebox{\ht\mysaveboxM-0.5\ht\mysaveboxT +0.5\dp\mysaveboxT-0.5\fboxrule}{\usebox{\mysaveboxT}}%
        }%
    }%
}
\def\ch{\mathrm{ch}}
\def\dTmax{\dT_\mathsf{max}}
\def\dTmin{\dT_\mathsf{min}}
\def\dT{\Delta t}
\def\sh{\mathrm{sh}}
\def\swing{\mathsf{swing}}
\title{\LARGE \bf
    Dynamic Walking over Rough Terrains by Nonlinear \\
    Predictive Control of the Floating-base Inverted Pendulum
}
\author{St\'ephane Caron$^{1}$ and Abderrahmane Kheddar$^{1,2}$%
    \thanks{*This work is supported in part by H2020 EU project COMANOID
    \url{http://www.comanoid.eu/}, RIA No 645097.}%
    \thanks{$^{1}$CNRS-UM2 LIRMM, IDH group, UMR5506, Montpellier, France.}%
    \thanks{$^{2}$CNRS-AIST Joint Robotics Laboratory (JRL), UMI3218/RL. \newline
    Corresponding author: {\tt\footnotesize stephane.caron@normalesup.org}}%
}
\begin{document}

\maketitle
\thispagestyle{empty}
\pagestyle{empty}

\begin{abstract}
    We present a real-time pattern generator for dynamic walking over rough
    terrains. Our method automatically finds step durations, a critical issue
    over rough terrains where they depend on terrain topology. To achieve this
    level of generality, we consider a Floating-base Inverted Pendulum (FIP)
    model where the center of mass can translate freely and the zero-tilting
    moment point is allowed to leave the contact surface. This model is
    equivalent to a linear inverted pendulum with variable center-of-mass
    height, but its equations of motion remain linear. Our solution then
    follows three steps: (i) we characterize the FIP contact-stability
    condition; (ii) we compute feedforward controls by solving a nonlinear
    optimization over receding-horizon FIP trajectories. Despite running at
    30~Hz in a model-predictive fashion, simulations show that the latter is
    too slow to stabilize dynamic motions. To remedy this, we (iii) linearize
    FIP feedback control into a constrained linear-quadratic regulator that
    runs at 300~Hz. We finally demonstrate our solution in simulations with a
    model of the HRP-4 humanoid robot, including noise and delays over state
    estimation and foot force control.
\end{abstract}

\section{Introduction}

A walking pattern is \emph{dynamic} when it contains single-support phases that
are not statically stable, \ie where the center of mass (COM) of the robot
leaves the area above contact and undergoes divergent dynamics. These dynamic
phases are used to increase walking speed as well as to control balance, as
illustrated by reactive stepping strategies~\cite{morisawa2009humanoids,
santacruz2013icra}. To estimate the dynamic capabilities of a rough-terrain
walking pattern generator\footnote{Pattern generators compute both feedforward
and feedback walking controls under real-time constraints, as opposed to motion
generators, which only compute feedforward controls without time constraints.}
(RT-WPG), we can measure the duration of double-support phases, or the amount
of time spent in statically-stable configurations. For instance, our previous
RT-WPG~\cite{caron2016humanoids} spends roughly 40\% of its gait in
double-support, and about 90\% of the time in statically-stable configurations.
In contrast, our present RT-WPG, in the same scenario, spends less than 5\% of
its gait in double-support, and about 40\% of the time in statically-stable
configurations.

A major concern while walking is to enforce \emph{contact stability}, \ie
making sure that contacts neither slip nor tilt while the robot pushes on them
to move. To generate contact-stable trajectories, one needs to guarantee that
all contact wrenches throughout the motion lie inside their respective wrench
cones~\cite{caron2015icra}. So far, the full problem has only been solved in
whole-body motion generation~\cite{lengagne2013ijrr}, or more recently in
centroidal motion generation~\cite{carpentier2016icra, dai2016humanoids,
ponton2016humanoids}, where computation time is provided until a solution is
found. For real-time control, it is common to reduce the number of variables by
regulating the centroidal angular momentum to $\dot{\bfL}_G = 0$. Doing so
simplifies the Newton-Euler equations of motion to:
\begin{equation*}
    \pdd_G \ = \ \lambda (\bfp_G - \bfp_Z) + \bfg,
\end{equation*}
with $\bfp_G$ the COM position, $\lambda$ a positive quantity, $\bfp_Z$ the
whole-body zero-tilting moment point (ZMP) and $\bfg$ the gravity vector. When
the COM motion is constrained to a plane, $\lambda$ is constant and we obtain
the Linear Inverted Pendulum Mode (LIPM). Predictive control of the COM in the
LIPM can be formulated as a quadratic program, where the cost function encodes
a number of desired behaviors while inequalities enforce the contact-stability
condition: the ZMP lies within the convex hull of contact points. (This
condition is actually incomplete; we will derive the complete condition below.)
This formulation successfully solved the problem of walking over flat
surfaces~\cite{kajita2003icra, wieber2006humanoids, herdt2010online}. However,
it did not extend to rough terrains where the shape of the ZMP support area
varies during motion~\cite{caron2017tro}.

Solutions for rough terrains have been proposed that track pre-defined COM
trajectories across contact switches~\cite{hirukawa2007icra, audren2014iros},
yet they were not designed to compute their own feedforward controls. In
pattern generation, three main directions have been explored so far. In one
line of work, 3D extensions of the LIPM~\cite{zhao2012humanoids,
englsberger2015tro} provided the basis for the first RT-WPGs, combining fast
footstep replanning with force-tracking control. So far, these solutions use
partial contact-stability conditions (friction is not modeled) and do not
generalize the constraint saturation behavior of their 2D
counterparts~\cite{wieber2006humanoids}. Other works went for harder nonlinear
optimization problems~\cite{naveau2017ral, serra2016humanoids,
vanheerden2017ral}, but again they did not model friction and were only applied
to walking on parallel horizontal surfaces. Finally, we recently proposed
in~\cite{caron2016humanoids} an RT-WPG that enforces full contact stability and
walks across arbitrary terrains, but at the cost of a conservative problem
linearization (a similar idea appeared simultaneously in the motion generator
from~\cite{dai2016humanoids}).

We now bridge the gap between these three directions with an RT-WPG that is (1)
based on a 3D extension of the LIPM, for which we (2) derive the full contact
stability condition and consequently (3) formulate and solve as a nonlinear
optimal-control problem. The latter being experimentally too slow for
predictive control, we further derive a constrained linear-quadratic regulator
based on the same model for high-frequency stabilization.

\section{The Floating-base Inverted Pendulum}
\label{sec:fig}

Let us consider a biped in single support. We define the surface patch $\calS$
as the convex hull of contact points, and denote by $\calC$ the contact
friction cone. In the pendulum mode, the center of pressure (COP) $C$ is
located at the intersection between $\calS$ and the central axis of the contact
wrench, which is then also a zero-moment axis. Contact breaks when this
intersection becomes empty, or switches to another mode when $C$ reaches the
boundaries of $\calS$. The Newton equation of motion of the COM in the pendulum
mode is:
\begin{equation}
    \label{ipm}
    \pdd_G \ = \ \lambda (\bfp_G - \bfp_C) + \bfg
\end{equation}
In general, this equation is bilinear as the stiffness value $\lambda$ and COP
location $\bfp_C$ are two different components of the time-varying contact
wrench. Making time explicit, the differential equation of the COM position is:
\begin{equation*}
    \pdd_G(t) - \lambda(t) \bfp_G(t) \ = \ -\lambda(t) \bfp_C(t) + \bfg
\end{equation*}
The additional constraint of the LIPM is that $\bfp_G$ and $\bfp_C$ lie in
parallel planes separated by a fixed distance $h = \bfn \cdot (\bfp_G -
\bfp_C)$, so that $\lambda = (\bfn \cdot \bfg) / h$ becomes a constant by
Equation~\eqref{ipm}. Interestingly, in the context of balance control for a 2D
inverted pendulum, Koolen~\etal~\cite{koolen2016humanoids} recently studied the
symmetric problem where $\bfp_C$ is fixed and polynomial COM interpolation
yields a variable $\lambda(t)$.

\subsection{Contact stability in single support}

\begin{proposition}
    \label{ipm-cons}
    A motion of the system~\eqref{ipm} in single contact $(\calS, \calC)$ is
    contact-stable \emph{if and only if}:
    \begin{eqnarray}
        \label{es1} \lambda & \in & \bbR^+ \\
        \label{es2} \bfp_C & \in & \calS \\
        \label{es3} \bfp_G & \in & \bfp_C + \calC
    \end{eqnarray}
    where $\calS$ and $\calC$ respectively denote the surface patch and
    friction cone of the contact.
\end{proposition}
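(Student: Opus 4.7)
The plan is to translate the geometric ``contact-stability'' statement (contact wrench in the wrench cone) into conditions on the resultant contact force and moment, and then use Newton's second law together with~\eqref{ipm} to rewrite those conditions purely in terms of $\lambda$, $\bfp_C$ and $\bfp_G$. Contact stability for a single patch decomposes into three ingredients: (a) unilaterality of the normal force, (b) the absence of tipping (the zero-moment/center-of-pressure point must lie inside $\calS$), and (c) no slipping, i.e.\ the resultant force must lie in the friction cone $\calC$. My task is to show that (a)--(c) are equivalent to \eqref{es1}--\eqref{es3}.

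First I would apply Newton's equation to the COM. With total mass $m$, the resultant contact force is $\bff = m(\pdd_G - \bfg)$, which by~\eqref{ipm} simplifies to $\bff = m\lambda(\bfp_G - \bfp_C)$. Taking the inner product with the unit inward normal $\bfn$ of $\calS$ gives $\bfn \cdot \bff = m\lambda\,h$, where $h = \bfn\cdot(\bfp_G - \bfp_C) > 0$ because the COM is strictly above the support patch during pendulum-mode single support. Unilaterality $\bfn\cdot\bff \geq 0$ is therefore equivalent to $\lambda \geq 0$, which is~\eqref{es1}.

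Next I would invoke the definition of the pendulum mode recalled in Section~\ref{sec:fig}: $\bfp_C$ is the intersection of the central axis of the contact wrench with the contact plane, i.e.\ the point where the tangential moment vanishes. Under this definition, the no-tipping condition is precisely that this COP lies inside the support patch, namely $\bfp_C \in \calS$, which is~\eqref{es2}. For the friction condition, once \eqref{es1} holds we have $m\lambda \geq 0$, so, since $\calC$ is a cone, $\bff = m\lambda(\bfp_G - \bfp_C) \in \calC$ if and only if $\bfp_G - \bfp_C \in \calC$ (in the degenerate case $\lambda = 0$ the force vanishes and the condition is trivial). This yields~\eqref{es3}. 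The converse is immediate: given \eqref{es1}--\eqref{es3}, the wrench reconstructed from $\bff = m\lambda(\bfp_G-\bfp_C)$ applied at $\bfp_C \in \calS$ has a positive normal component, zero tangential moment, and a force inside $\calC$, so it belongs to the contact wrench cone.

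The only genuinely delicate point is ingredient (b): one must argue that in the pendulum mode ``contact stability'' really does reduce to ``$\bfp_C \in \calS$'', as opposed to the more general wrench-cone membership one would need with nonzero angular momentum. This is not a new calculation but a careful appeal to the COP characterization of the ZMP support area~\cite{caron2015icra}, and I would state it explicitly before invoking it, so that the equivalence \eqref{es1}--\eqref{es3} follows without ambiguity.
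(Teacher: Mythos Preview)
Your approach is genuinely different from the paper's. The paper proceeds by brute symbolic computation: it writes the full single-contact wrench cone of~\cite{caron2015icra} in coordinates (friction bounds~(W1), COP bounds~(W2), and eight yaw-torque inequalities~(W3)), substitutes the pendulum-mode wrench $\bff = m\lambda(\bfp_G-\bfp_C)$, $\bftau_O = \bfp_C \times \bff$, and then shows by inspection that the eight yaw inequalities factor as sums of products of nonnegative slacks from~\eqref{Z1}--\eqref{Z2} and are therefore redundant. Your geometric argument is cleaner in spirit and avoids the symbolic calculator, which is a real gain.

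However, there is a gap you should close. Your decomposition ``contact stability $=$ (a) unilaterality $+$ (b) COP in $\calS$ $+$ (c) force in $\calC$'' is \emph{not} true for a surface patch in general: the wrench cone of~\cite{caron2015icra} also contains the yaw-torque bounds~(W3), and these are precisely what the paper spends its appendix eliminating. Your final paragraph flags (b) as the delicate point, but (b) is routine; the real work is the yaw constraint, which you never list. The fix within your framework is short: observe that in the pendulum mode the contact moment at $C$ vanishes \emph{entirely} (not just tangentially), since $\bftau_C = (\bfp_G-\bfp_C)\times m\lambda(\bfp_G-\bfp_C)=\mathbf{0}$. The contact wrench is therefore a pure force at $C$. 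Then argue that any pure force $\bff\in\calC$ applied at a point $C\in\calS$ lies in the wrench cone: write $C=\sum_i\alpha_i V_i$ with $\alpha_i\geq 0$, $\sum_i\alpha_i=1$, and distribute $\bff$ as $\alpha_i\bff$ at each vertex $V_i$; each summand is a generator of the wrench cone. With this one-line lemma in place your argument is complete and, arguably, more transparent than the paper's computation.
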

\begin{proof}
    This result follows from injecting Equation~\eqref{ipm} into the analytical
    formula of the single-support wrench cone~\cite{caron2015icra}. See
    Appendix~\ref{proof-app} for calculations.
\end{proof}

The constraints \eqref{es1}--\eqref{es3} are expressed via set membership for a
geometric intuition. Denoting by $\bfS$ and $\bfC$ the halfspace-representation
matrices of the polygon $\calS$ and cone $\calC$, respectively, we can
formulate these constraints equivalently as:
\begin{align}
    \tag{2H} \label{es1h} -\lambda & \ \leq \ 0 \\
    \tag{3H} \label{es2h} \bfS \bfp_C & \ \leq \ 1 \\
    \tag{4H} \label{es3h} \bfC (\bfp_G - \bfp_C) & \ \leq \ 0
\end{align}
Equation~\eqref{es3} $\Leftrightarrow$ \eqref{es3h} provides the condition that
is missing in previous works~\cite{zhao2012humanoids, englsberger2015tro,
naveau2017ral, serra2016humanoids, vanheerden2017ral} to model friction. It
also completes the observation we made in Figure~6 of~\cite{caron2017tro} by
showing that, on horizontal floors, the points $\bfp_C$ where $\bfp_G \not\in
\bfp_C + \calC$ are \emph{exactly} those that need to be removed from the
convex hull of ground contact points to obtain the ZMP support area. The area
thus admits a direct geometric construction:

\begin{corollary}
    When contacts are coplanar, the ZMP support area $\calZ$ is the
    intersection between the surface patch and the backward friction cone
    rooted at the COM:
    \begin{equation}
        \calZ \ = \ \calS \cap (\bfp_G - \calC)
    \end{equation}
\end{corollary}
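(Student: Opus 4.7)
The plan is to read off the corollary as an immediate geometric consequence of Proposition~\ref{ipm-cons}. Since all contacts are coplanar, they may be aggregated into a single effective contact whose surface patch is $\calS$ (the convex hull of ground contact points in the common plane) and whose friction cone is the shared $\calC$, so that Proposition~\ref{ipm-cons} applies verbatim to this combined contact.

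First I would spell out the definition of the ZMP support area: $\calZ$ is the set of COP positions $\bfp_C$ in the contact plane for which, given the COM located at $\bfp_G$, there exists a $\lambda$ rendering the FIP dynamics~\eqref{ipm} contact-stable. Applying Proposition~\ref{ipm-cons} reduces membership in $\calZ$ to the simultaneous requirements \eqref{es1}--\eqref{es3}. The bulk of the argument then consists in rewriting \eqref{es3} geometrically: $\bfp_G \in \bfp_C + \calC$ is equivalent to $\bfp_G - \bfp_C \in \calC$ and hence to $\bfp_C \in \bfp_G - \calC$, where $-\calC$ denotes the reflection of the cone through the origin. This set is precisely the \emph{backward friction cone rooted at the COM}. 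Intersecting with \eqref{es2} yields $\bfp_C \in \calS \cap (\bfp_G - \calC)$.

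It remains to discharge condition \eqref{es1}. For coplanar contacts with the COM strictly above the contact plane, projecting~\eqref{ipm} onto the contact normal $\bfn$ gives $\lambda = -(\bfn \cdot \bfg + \bfn\cdot\pdd_G)/h$ with $h = \bfn\cdot(\bfp_G - \bfp_C) > 0$; for any physically admissible COM acceleration this is positive, so \eqref{es1} is automatic and imposes no further restriction on $\bfp_C$. Combining the two steps gives $\calZ = \calS \cap (\bfp_G - \calC)$.

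There is no real obstacle here: the corollary really is a one-line reading of Proposition~\ref{ipm-cons}. The only place that warrants care is the cone-reflection identity, which is purely notational, and the verification that \eqref{es1} is never binding in the coplanar case, which follows from a simple sign argument on the normal component of~\eqref{ipm}.
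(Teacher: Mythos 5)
Your proof is correct and takes essentially the same route as the paper, which offers no separate argument for this corollary: it is presented as an immediate reading of Proposition~\ref{ipm-cons}, the preceding text noting precisely that the COP points excluded from the convex hull are those with $\bfp_G \notin \bfp_C + \calC$, i.e.\ $\bfp_C \notin \bfp_G - \calC$. One small slip worth fixing: projecting \eqref{ipm} onto $\bfn$ gives $\lambda = \left(\bfn\cdot\pdd_G - \bfn\cdot\bfg\right)/h$ rather than $-\left(\bfn\cdot\bfg + \bfn\cdot\pdd_G\right)/h$, and the cleaner way to discharge \eqref{es1} is simply to note that it constrains only the existentially quantified $\lambda$ (equivalently the free normal COM acceleration), hence never restricts which $\bfp_C$ belong to $\calZ$.
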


Proposition~\ref{ipm-cons} also gives us a geometric construction of the COM
static-equilibrium polygon in single support:

\begin{corollary}
    The COM static-equilibrium polygon in single support is either:
    \begin{itemize}
        \item empty when the friction cone $\calC$ does not contain the
            vertical $\bfg$, or
        \item equal to the vertical projection of the surface patch $\calS$
            onto a horizontal plane.
    \end{itemize}
\end{corollary}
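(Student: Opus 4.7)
The plan is to apply Proposition~\ref{ipm-cons} to the zero-acceleration case $\pdd_G = \bm{0}$. First I would use~\eqref{ipm} to obtain $\lambda(\bfp_G - \bfp_C) = -\bfg$: since $\bfg \neq \bm{0}$ this immediately forces $\lambda > 0$, so \eqref{es1} holds automatically, and the equation can be rewritten as $\bfp_G - \bfp_C = -\bfg/\lambda$, a strictly positive multiple of the upward vertical.

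Substituting into \eqref{es3} turns that condition into $-\bfg \in \calC$ by conic invariance under positive scaling, while \eqref{es2} leaves $\bfp_C$ free to range over $\calS$. These two observations yield the dichotomy of the corollary: if $-\bfg \not\in \calC$, no admissible $(\lambda, \bfp_C, \bfp_G)$ exists and the static-equilibrium set of COM positions is empty; otherwise, for every $\bfp_C \in \calS$ and every $\lambda > 0$ the triple $(\lambda, \bfp_C, \bfp_C - \bfg/\lambda)$ satisfies all three contact-stability conditions of Proposition~\ref{ipm-cons}.

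In the nonempty case, I would conclude by letting $\lambda$ vary over $\bbR^+$: the admissible COM positions above a given $\bfp_C$ form the upward vertical ray rooted at $\bfp_C$. Taking the union over $\bfp_C \in \calS$ and projecting vertically onto any horizontal plane collapses each ray to a single point, recovering exactly the vertical projection of $\calS$. I do not anticipate any real obstacle; the only delicate aspects are excluding $\lambda = 0$ (ruled out by $\pdd_G = \bfg \neq \bm{0}$) and interpreting ``the cone contains the vertical $\bfg$'' as $-\bfg \in \calC$ in accordance with the sign convention of~\eqref{ipm}.
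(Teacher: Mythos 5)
Your argument is correct and follows essentially the same route as the paper: the paper's one-line proof invokes Proposition~\ref{ipm-cons} together with the fact that in static equilibrium the COP lies on the vertical below the COM, which is exactly what you derive by setting $\pdd_G = \bm{0}$ in~\eqref{ipm} (giving $\bfp_G - \bfp_C = -\bfg/\lambda$ with $\lambda > 0$, so that \eqref{es3} reduces to $-\bfg \in \calC$ and \eqref{es2} yields the vertical prism above $\calS$). Your only loose phrase is that the equation alone forces $\lambda \neq 0$, with the sign then coming from~\eqref{es1}; this is a nuance, not a gap.
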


\begin{proof}
    This result immediately follows from Proposition~\ref{ipm-cons} by
    recalling that, in static equilibrium, the COP is located at the vertical
    below the COM.
\end{proof}

Proposition~\ref{ipm-cons} shows how contact-stability inequalities, which are
bilinear in general, linearize without loss of generality when the contact
wrench is written in terms of $\lambda$ and $\bfp_C$. It is the main result we
use in this paper. Note that this linearization is not specific to single
support: in multi-contact as welll, bilinear inequalities become linear when
the contact wrench results from a fixed attractive or repellent point, as we
show in Appendix~\ref{attractors} for the interested reader. The difference
between multi-contact and single-support is that the former requires numerical
polytope projections in general~\cite{caron2016humanoids, caron2017tro}, while
we have now derived an analytical formula for the latter.

\subsection{Transferring nonlinearity}

Equations \eqref{ipm}--\eqref{es3} characterize the pendulum mode under full
contact stability:
\begin{empheq}[box={\Garybox[COP-based Inverted Pendulum]}]{align*}
    \pdd_G(t) & = \lambda(t) (\bfp_G(t) - \bfp_C(t)) + \bfg \\
    \mathrm{s.t.} & \left\{
        \begin{array}{r}
            -\lambda(t) \ \leq \ 0 \\
            \bfS \bfp_C(t) \ \leq \ 1 \\
            \bfC (\bfp_G(t) - \bfp_C(t)) \ \leq \ 0
        \end{array}
        \right.
\end{empheq}
This system is linearly constrained, but its forward equation of motion
contains a product between the two time-varying terms $\lambda$ and $\bfp_C$.
We transform it by replacing $C$ with the ZMP\footnote{Recall that all points
of the zero-moment axis $(GC)$ can be called ``zero-moment
points''~\cite{caron2017tro}.} $Z$ defined by:
\begin{equation}
    \label{transform}
    \bfp_Z \ = \ \bfp_C + \left[1 - \frac{\lambda}{\omega^2}\right] (\bfp_G
    - \bfp_C)
\end{equation}
where $\omega^2$ is a positive constant, for instance chosen as $g/\ell$ with
$\ell$ the leg length of the robot, or resulting from a COM plane
constraint~\cite{caron2017tro, zhao2012humanoids}. In the pendulum mode, this
definition coincides with the Enhanced Centroidal Moment
Pivot~\cite{englsberger2015tro}. This transformation has the benefit of making
the forward equation of motion linear:
\begin{equation}
    \label{fip}
    \pdd_G(t) \ = \ \omega^2 (\bfp_G(t) - \bfp_Z(t)) + \bfg
\end{equation}
It does not eliminate nonlinearity, however, but merely transfers it to the
system's inequality constraints.

\begin{proposition}
    \label{fip-cons}
    A motion of the system~\eqref{fip} in single contact $(\calS, \calC)$ is
    contact-stable if \emph{and only if}:
    \begin{eqnarray}
        \label{fineq1} \bfp_Z & \in & \bfp_G + \mathrm{cone}(\calS - \bfp_G) \\
        \label{fineq2} \bfp_G & \in & \bfp_Z + \calC
    \end{eqnarray}
    where $\calS$ and $\calC$ respectively denote the surface patch and
    friction cone of the contact, and $\mathrm{cone}(X)$ is the conical hull of
    a set $X$.
\end{proposition}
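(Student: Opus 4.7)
The plan is to derive Proposition \ref{fip-cons} as a direct corollary of Proposition \ref{ipm-cons} by leveraging the change of variables \eqref{transform} as a non-injective reparameterization of the contact wrench. Before either implication, I would rewrite \eqref{transform} in the more convenient algebraic form
\[
    \bfp_Z - \bfp_G \ = \ \frac{\lambda}{\omega^2}\,(\bfp_C - \bfp_G),
\]
obtained by subtracting $\bfp_G$ from both sides of \eqref{transform} and collecting terms. This identity simultaneously places $\bfp_G, \bfp_C, \bfp_Z$ on a common line (the zero-moment axis) and ties the sign and magnitude of $\bfp_Z - \bfp_G$ to $\lambda$.

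For the ``only if'' direction, I would read ``contact-stable'' through \eqref{ipm}: there exist $(\lambda, \bfp_C)$ satisfying \eqref{es1}--\eqref{es3} and producing $\bfp_Z$ via \eqref{transform}. The displayed identity then reads $\bfp_Z - \bfp_G = (\lambda/\omega^2)(\bfp_C - \bfp_G)$ with $\lambda/\omega^2 \geq 0$ (by \eqref{es1}) and $\bfp_C - \bfp_G \in \calS - \bfp_G$ (by \eqref{es2}), hence $\bfp_Z - \bfp_G \in \mathrm{cone}(\calS - \bfp_G)$, which is \eqref{fineq1}. Flipping the sign gives $\bfp_G - \bfp_Z = (\lambda/\omega^2)(\bfp_G - \bfp_C)$, which lies in $\calC$ because $\calC$ is a cone closed under nonnegative scaling and $\bfp_G - \bfp_C \in \calC$ by \eqref{es3}; this is \eqref{fineq2}.

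For the ``if'' direction, I would constructively recover $(\lambda, \bfp_C)$ from $(\bfp_G, \bfp_Z)$. Assuming $\bfp_Z \neq \bfp_G$, \eqref{fineq1} together with convexity of $\calS$ yields $\alpha > 0$ and $\bfs \in \calS$ with $\bfp_Z - \bfp_G = \alpha(\bfs - \bfp_G)$. Setting $\bfp_C \defeq \bfs$ and $\lambda \defeq \alpha \omega^2$, the conditions \eqref{es1} and \eqref{es2} hold by construction, and a direct substitution confirms \eqref{transform}. For \eqref{es3}, the construction gives $\bfp_G - \bfp_C = (1/\alpha)(\bfp_G - \bfp_Z)$, so \eqref{fineq2} and the cone property of $\calC$ yield $\bfp_G - \bfp_C \in \calC$.

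The main (minor) obstacle I anticipate is the degenerate case $\bfp_Z = \bfp_G$, in which \eqref{transform} forces either $\lambda = 0$ or $\bfp_C = \bfp_G$, and neither is automatically compatible with \eqref{es2}--\eqref{es3} from \eqref{fineq1}--\eqref{fineq2} alone. Physically this case is the free-fall regime $\pdd_G = \bfg$, where the net contact force vanishes and contact-stability reduces to a vacuous wrench constraint, so I would dispatch it either by continuity/closure of the admissible set around $\lambda \to 0^+$, or by observing that it corresponds to take-off, outside the single-contact regime the proposition is intended to cover.
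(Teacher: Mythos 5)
Your proof is correct, and its overall skeleton (double implication between the $(\lambda,\bfp_C)$-system \eqref{ipm}--\eqref{es3} and the $(\bfp_G,\bfp_Z)$-system \eqref{fip}--\eqref{fineq2}, pivoting on the identity $\bfp_Z-\bfp_G=\tfrac{\lambda}{\omega^2}(\bfp_C-\bfp_G)$) is the same as the paper's; the forward direction is essentially identical, with your ``cone closed under nonnegative scaling'' playing the role of the paper's ``same sign after left-multiplying by $\bfC$.'' Where you genuinely diverge is the converse: the paper recovers $\lambda$ through the explicit inverse formula $\lambda=\omega^2\,\bfn\cdot(\bfp_G-\bfp_Z)/(\bfn\cdot\bfp_G-a)$, invoking the supporting-plane equation and the assumption that the COM lies above the contact plane, and identifies $\bfp_C$ geometrically as the intersection of the line $(GZ)$ with that plane; you instead extract $\alpha>0$ and $\bfs\in\calS$ directly from the conical-hull representation of \eqref{fineq1} (using convexity of $\calS$) and set $\bfp_C:=\bfs$, $\lambda:=\alpha\omega^2$, which is more self-contained: it never needs the plane equation or the COM-height assumption, and your $\bfs$ automatically coincides with the true COP since it lies both in $\calS$ and on the ray from $G$ through $Z$. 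You also explicitly isolate the degenerate case $\bfp_Z=\bfp_G$, which the paper (and, implicitly, the appendix proof of Proposition~\ref{ipm-cons}, which factors out $\lambda$) glosses over; rather than a continuity argument, the cleanest dispatch is to note that $\bfp_Z=\bfp_G$ forces $\pdd_G=\bfg$, i.e.\ a zero contact wrench, which trivially belongs to the contact wrench cone, so contact stability holds and both \eqref{fineq1} and \eqref{fineq2} are satisfied with the zero vector, making the equivalence valid in this case as well.
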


\begin{proof}
    We proceed by double-implication between the systems
    \eqref{ipm}--\eqref{es3} and \eqref{fip}--\eqref{fineq2}. ($\Rightarrow$)
    Rewrite Equation~\eqref{transform} as:
    \begin{equation}
        \label{rfb1}
        \bfp_Z - \bfp_G \ = \ \frac{\lambda}{\omega^2} (\bfp_C - \bfp_G)
    \end{equation}
    In this form, it is clear that \eqref{es1} $\wedge$ \eqref{es2}
    $\Rightarrow$ \eqref{fineq1}. Left-multiplying by $\bfC$, this equation
    further shows that $\bfC (\bfp_G - \bfp_Z)$ and $\bfC (\bfp_G - \bfp_C)$
    have the same sign, so that \eqref{es1} $\wedge$ \eqref{es3} $\Rightarrow$
    \eqref{fineq2}. ($\Leftarrow$) For the reciprocal implication, consider the
    inverse transform on the stiffness coefficient:
    \begin{equation}
        \label{rfb2}
        \lambda \ = \ \omega^2 \frac{\bfn \cdot (\bfp_G - \bfp_Z)}{\bfn \cdot
        (\bfp_G - \bfp_C)} \ = \ \omega^2 \frac{\bfn \cdot (\bfp_G -
        \bfp_Z)}{\bfn \cdot \bfp_G - a}
    \end{equation}
    where $\bfn \cdot \bfp = a$ is the equation of the supporting plane of the
    surface patch $\calS$. Note that $\bfn$ is both the plane normal and the
    inner axis-vector of the friction cone $\calC$. Given that the COM cannot
    be located below contact, $\bfn \cdot \bfp_G > a$ and \eqref{fineq2} imply
    that $\lambda \geq 0$. Then, Equation \eqref{rfb1} shows once again that
    $\bfC (\bfp_G - \bfp_Z)$ and $\bfC (\bfp_G - \bfp_C)$ have the same sign,
    so that \eqref{fineq2} $\Rightarrow$ \eqref{es3}. Finally, the COP being
    located at the intersection between $(GZ)$ and the supporting plane of
    $\calS$, \eqref{fineq1} implies \eqref{es2} by construction.
\end{proof}

Denoting by $\{V_i\}$ the vertices of the surface patch $\calS$, the two
conditions \eqref{fineq1}--\eqref{fineq2} can be written equivalently in
halfspace representation as:
\begin{align}
    \tag{8H} \label{fineq1h} \forall i,\ \overrightarrow{V_i V_{i+1}} \cdot (\overrightarrow{GV_i} \times \overrightarrow{V_iZ}) & \ \leq \ 0 \\
    \tag{9H} \label{fineq2h} \bfC (\bfp_G - \bfp_Z) & \ \leq \ 0
\end{align}
Using the transform~\eqref{transform}, we have therefore reformulated the
COP-based inverted pendulum into an equivalent ZMP-based model where the ZMP is
allowed to leave the surface patch. We coin it the Floating-based Inverted
Pendulum (FIP):
\begin{empheq}[box={\Garybox[Floating-base Inverted Pendulum]}]{align*}
    & \pdd_G(t) = \omega^2 (\bfp_G(t) - \bfp_Z(t)) + \bfg \\
    \mathrm{s.t.} & \left\{
        \begin{array}{r}
            \forall i,\ \overrightarrow{V_i V_{i+1}} \cdot
            (\overrightarrow{G(t)V_i} \times \overrightarrow{V_iZ(t)}) \ \leq \ 0 \\
            \bfC (\bfp_G(t) - \bfp_Z(t)) \ \leq \ 0
        \end{array}
        \right.
\end{empheq}
The forward equation of motion of the FIP is linear, as well as its friction
constraint, which will prove useful to compute feedback controls by constrained
linear-quadratic regulation. The main difficulty in computing feedforward
trajectories for this system lies in its bilinear ZMP constraint.

\begin{figure}[t]
    \centering
    \hspace{10pt}
    \includegraphics[height=5cm]{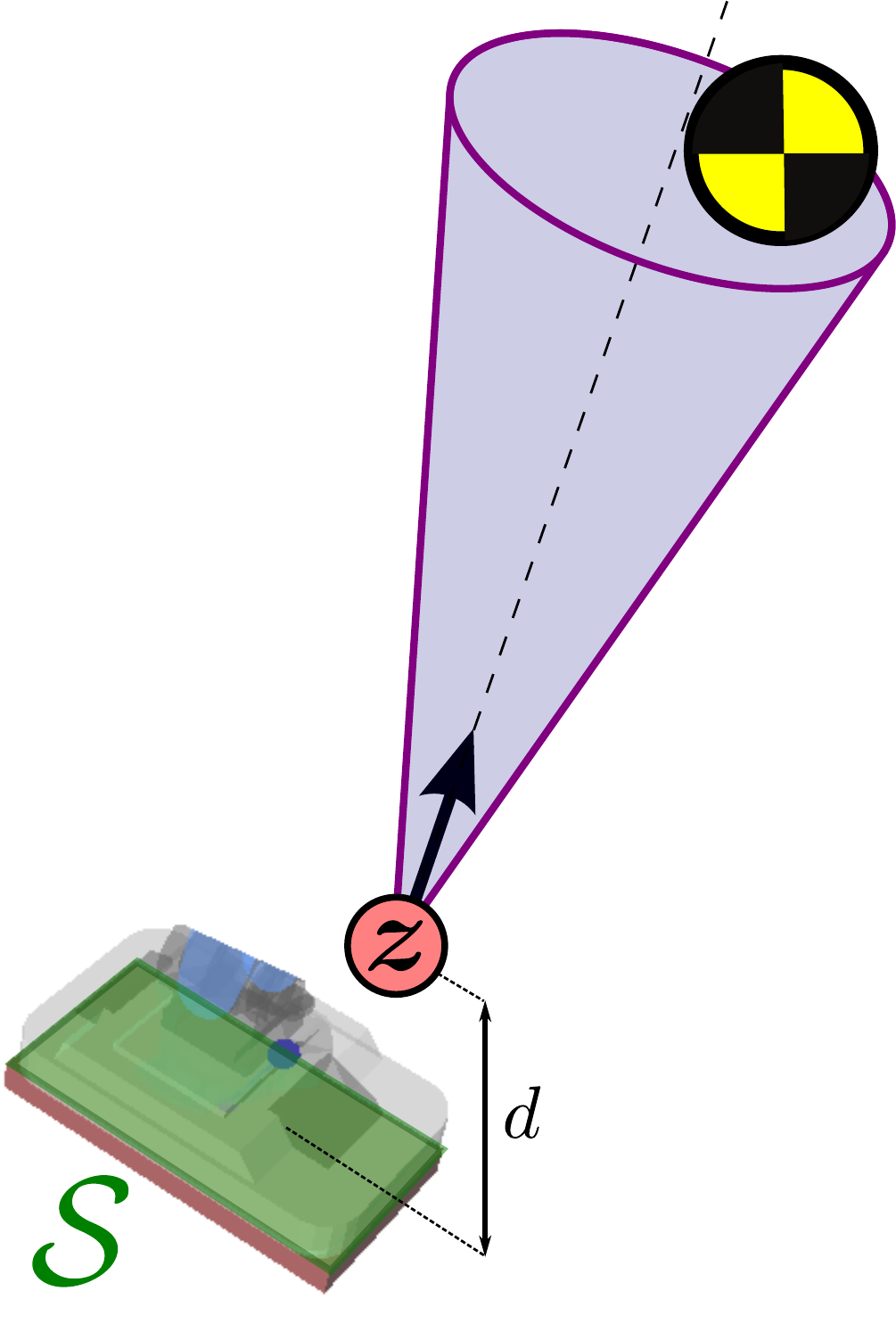}
    \includegraphics[height=5cm]{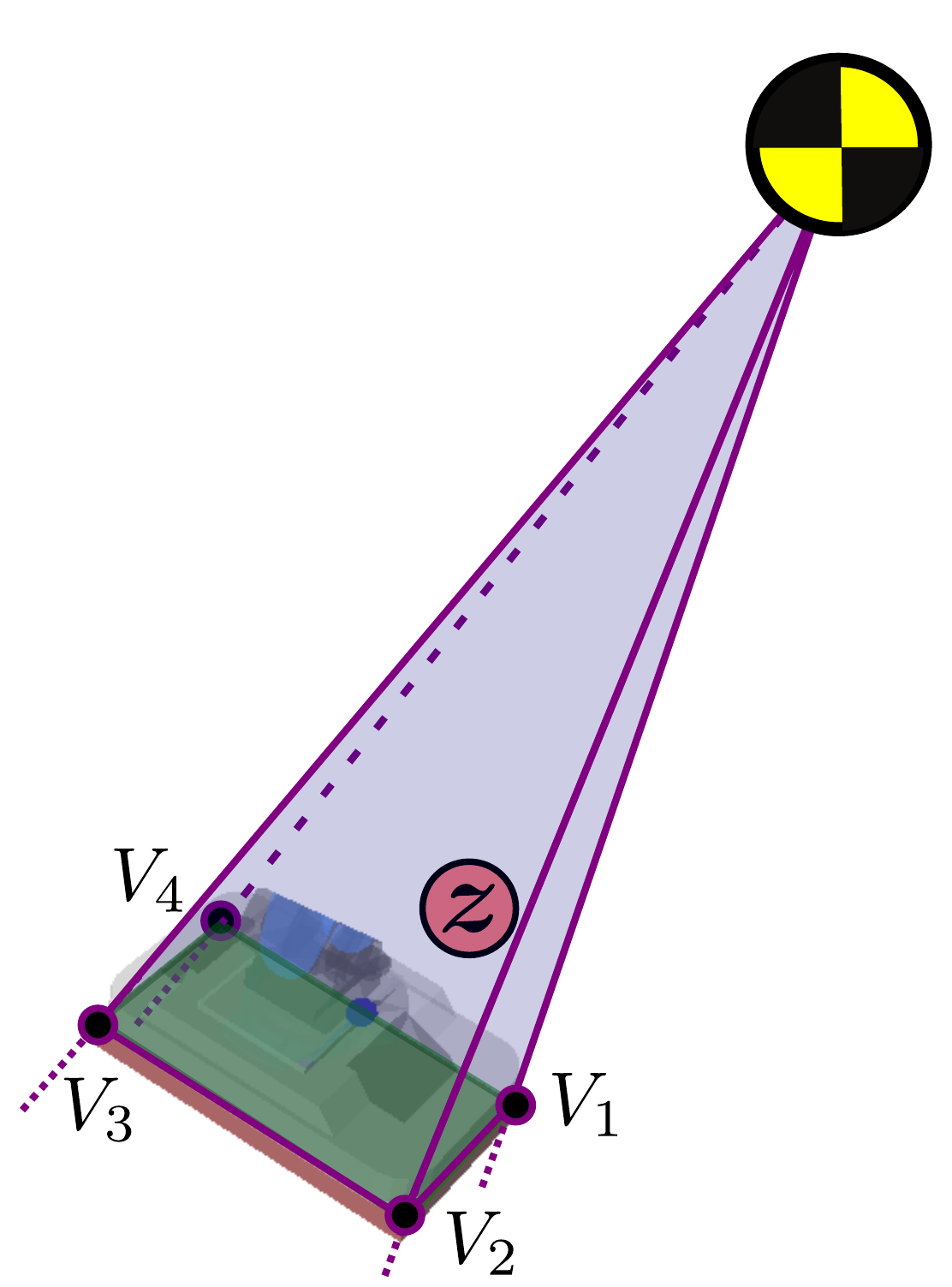}
    \hspace{10pt}
    \caption{
        The two necessary and sufficient conditions for contact stability of
        the Floating-base Inverted Pendulum. \textbf{Friction (left):} the COM
        belongs to the contact friction cone $\calC$ projected from the ZMP
        $\bfz$. \textbf{Center of pressure (right):} the ZMP $\bfz$ belongs to
        the cone projected from the COM and containing the vertices of the
        contact surface $\calS$.
    }
\end{figure}

\section{Nonlinear Predictive Control of the FIP}
\label{sec:nmpc}

We now formulate FIP predictive control as a nonlinear program (NLP). Our main
motivation in switching from convex~\cite{caron2016humanoids} to non-convex
optimization is twofold: on the one hand, solving the bilinear COP
constraint~\eqref{fineq1h} without approximation, and on the other hand,
deriving step timings as output variables rather than user-defined parameters.
This second feature is crucial over rough terrains, where proper timings depend on
terrain topology. Adapting step timings has been recently realized for walking
on horizontal floors using quadratic programming~\cite{khadiv2016humanoids},
but it had not been demonstrated yet on uneven terrains.

\subsection{Multiple shooting formulation}

As in~\cite{carpentier2016icra}, we formulate the nonlinear predictive control
(NMPC) problem by direct multiple shooting. A receding horizon over future
system states is divided into $N$ time intervals of durations $\Delta t[k]$, so
that each interval $k \in \{0, \ldots, N-1\}$ starts at time $t[k] = \sum_{j <
k} \Delta t[j]$. The variables of our NLP are:
\begin{itemize} 
    \item $\bfp_G[k]$: COM position at time $t[k]$,
    \item $\pd_G[k]$: COM velocity at time $t[k]$,
    \item $\bfp_Z[k]$: ZMP position at time $t[k]$,
    \item $\Delta t[k]$: step duration, bounded by $[\Delta t_\mathsf{min},
        \Delta t_\mathsf{max}]$.
\end{itemize}
The differential equation of the FIP~\eqref{fip} is solved over each interval
with constant ZMP located at $\bfp_Z[k]$ to obtain the matching conditions:
\begin{align}
    \bfp_G[k + 1] & \ = \ \bfp_G[k] + {\pd_G[k]} \frac{\sh[k]}{\omega} + {\bfu[k]}\frac{\ch[k] - 1}{\omega^2} \\
    \pd_G[k + 1] & \ = \ \pd_G[k] \ch[k] + \bfu[k] \frac{\sh[k]}{\omega}
\end{align}
where the following shorthands have been used:
\begin{align}
    \bfu[k] & \ = \ \omega^2 (\bfp_G[k] - \bfp_Z[k]) + \bfg \label{acc-cons} \\
    \ch[k] & \ = \ \cosh(\omega \Delta t[k]) \\
    \sh[k] & \ = \ \sinh(\omega \Delta t[k])
\end{align}
The next constraints to be enforced over at collocation points are friction and
COP inequalities~\eqref{fineq1}--\eqref{fineq2}. It is important to note here
that contact-stability is only checked at collocation times $t[k]$, as done in
the vast majority of present works that solve constrained optimal control
problems~~\cite{carpentier2016icra, dai2016humanoids, ponton2016humanoids,
wieber2006humanoids, herdt2010online, caron2017tro, audren2014iros,
naveau2017ral, vanheerden2017ral}. This does not guarantee that the constraints
will not be violated between $t[k]$ and $t[k+1]$.

Given that NLP solvers can handle both linear and nonlinear constraints, we
tried two variants of the friction cone constraint~\eqref{fineq2}:
\begin{itemize} 
    \item \textbf{FC1:} the linear constraints~\eqref{fineq2h} corresponding to
        the polyhedral approximation of the friction cone $\calC$;
    \item \textbf{FC2:} the second-order isotropic friction cone, \ie without
        approximation, which can be written as:
        \begin{equation}
        \| \overrightarrow{ZG} \|_2^2 - (1 + \mu^2) (\overrightarrow{ZG} \cdot
            \bfn) \ \leq \ 0
        \end{equation}
\end{itemize}
Second-order inequalities reduce the constraint dimension but increase its
complexity. We compared the performance of both approaches in simulations, and
observed that computations were roughly 10\% faster with FC2. This does not
mean that second-order constraints always perform better, though, as
we observed degraded performances with second-order COM-ZMP cones (obtained by
replacing the contact polygon with a contact ellipsoid).

\subsection{Boundary conditions}

We include both initial and terminal conditions in our predictive problem:
\begin{itemize} 
    \item $\bfp_G[0]$ is equal to the estimated COM position at the beginning
        of the control cycle,
    \item $\pd_G[0]$ is equal to the estimated COM velocity at the beginning of
        the control cycle,
    \item $\bfp_Z[N - 1] = \bfxi^d_\mathsf{end}$ the desired capture point at
        the end of the predictive horizon.
\end{itemize}
The ability to define capture points~\cite{pratt2006humanoids} is another
advantage of the FIP compared to models with nonlinear forward equations of
motion. From Equation~\eqref{fip}, and following the derivation from
\eg~\cite{englsberger2015tro}, the instantaneous capture point in the FIP is:
\begin{equation}
    \bfxi(t) \ = \ \bfp_G(t) + \frac{\pd_G(t)}{\omega} + \frac{\bfg}{\omega^2}
\end{equation}
The boundary value $\bfxi^d_\mathsf{end}$ is derived from the next contact
location and a reference walking velocity $v^d$ provided by the user.
Specifically, if $F$ is the last contact location of the receding horizon and
$(\bft_F, \bfb_F, \bfn_F)$ the corresponding contact frame, then:
\begin{equation}
    \bfxi^d_\mathsf{end} \ = \ \bfp_F + v^d \frac{\bft_F}{\omega} + \frac{\bfg}{\omega^2}
\end{equation}
It matches a desired COM position $\bfp_G^d$ located at the vertical above $F$,
along with a forward COM velocity equal to $v^d$.

We chose $v^d$ so that $\bfxi^d_\mathsf{end}$ belongs to the surface patch
$\calS$. This way, if the NMPC stops providing updated feedforward trajectories
for some reason (which may happen as we are solving non-convex problems), at
least regulation around the latest successful trajectory will steer the system
to a stop. One could replace this simple post-preview behavior with more
general \emph{boundedness constraints}~\cite{lanari2015humanoids}, based \eg on
heuristic post-preview ZMP trajectories derived from terrain topology.

\subsection{Cost function}

The cost function of our NLP is a weighted combination of three integral and
one terminal terms:
\begin{equation*}
    \int_{t=0}^T (w_Z \| \bfp_Z - \bfp_F \|^2 + w_G \| \pdd_G \|^2 + w_T) {\rm
    d}t + w_\xi \| \bfxi[N] - \bfxi^d_\mathsf{end} \|^2
\end{equation*}
\begin{itemize} 
    \item $\int \| \bfp_Z - \bfp_F \|^2 {\rm d}t $, where $F$ is the center of
        the supporting foot (note that there may be two different supporting
        foot in a predictive horizon due to contact switches). This term favors
        solutions with lower foot contact torques. We give it a weight of $w_Z
        = 10^{-5}$.
    \item $\int \| \pdd_G \|^2 {\rm d}t$, a regularization term used to avoid
        unnecessarily high accelerations. We give it a weight of $w_G = 10^{-3}$.
    \item $T = \int 1 {\rm d}t$, the total duration of the trajectory. This term
        plays a significant role in balancing the acceleration regularization
        term, which otherwise generates local minima where the system tries to
        avoid moving altogether. We give it a weight of $w_T = 10^{-2}$.
    \item $\| \bfxi[N] - \bfxi^d_\mathsf{end}\|^2$, where $\bfxi[N]$ is the
        capture point defined from $\bfp_G[N]$ and $\pd_G[N]$ at the end of the
        predictive horizon. This term is the state analog of the ZMP boundary
        condition. The problem is better conditioned when it is put in the cost
        function rather than as a hard constraint. We give it a weight of
        $w_\xi = 1$.
\end{itemize}

\subsection{Contact switches}

Contrary to horizontal-floor solutions that preview several future
footsteps~\cite{wieber2006humanoids, herdt2010online}, our controller previews
exactly one step ahead during single-support phases (we use our conservative
linear MPC~\cite{caron2016humanoids} during double-support phases). Our
nonlinear program does not model the swing foot trajectory. Rather, it relies
on an estimate of the time to heel strike, which we construct as
in~\cite{caron2016hal}:
\begin{itemize}
    \item Interpolate a polynomial path from the current swing-foot location to
        its target foothold (spherical linear interpolation is used to
        interpolation its orientation).
    \item Use Time-Optimal Path Parameterization~(TOPP)~\cite{pham2014tro} to
        retime the interpolated path under conservative foot acceleration
        constraints.
    \item Take the duration $T_\swing$ of the retimed path as estimate for the
        time to heel strike.
\end{itemize}
The $N$ time intervals of the receding horizon are then split into two
categories. The first half of them is dedicated to the swing interval $[0,
T_\swing]$ until heel strike, where contact stability is enforced with respect
to the current supporting foot, while for the second half it is enforced with
respect to the next foothold. This assignment is matched with step durations by
the last constraint of our NLP:
\begin{equation}
    \sum_{k=0}^{N/2} \Delta t[k] \ \geq \ T_\swing.
\end{equation}

\subsection{Implementation details}

We construct NLPs using the CasADi symbolic framework~\cite{casadi} and solve
them with the primal-dual interior-point solver IPOPT~\cite{ipopt}. Major
settings that allowed us to reach fair computation times include:
\begin{itemize} 
    \item Using \texttt{MX} rather than \texttt{SX} CasADi symbols.
    \item Using the MA27 or MA97 linear solvers within IPOPT.
    \item Capping the CPU time and number of iterations to 100 ms and 100,
        respectively. When these budgets are exceeded, the solver has most
        likely diverged away from any feasible solution.
    \item Using the adaptive rather than monotone update strategy for the
        barrier parameter, which made computations roughly 40\% faster.
\end{itemize}
With this implementation, it takes roughly 10 to 40 ms to solve an NMPC problem
(see Section~\ref{simus} for details). These computation times are on the same
scale as those reported in the warm-started phase of~\cite{carpentier2016icra},
but we don't suffer a second-long cold start to generate an initial feasible
solution. This is most likely because the problem we solve is smaller
(single-support) and we have reformulated its structure concisely.

\subsection{Tunings for variable time steps}

Each step duration $\Delta t[k]$ in the NLP is lower and upper bounded by two
parameters $\dTmin$ and $\dTmax$ that affect solver performances. Expectedly,
computation time increases with $\dTmax$, but this parameter cannot be too low
as the problem becomes infeasible below a certain threshold
$\dTmax^\mathsf{lim}$. Figure~\ref{fig:dTmax} shows how computation
times\footnote{All computations reported in this paper were run on a personal
laptop computer, CPU: Intel\textsc{(r)} Core\textsc{(tm)} i7-6500U CPU @ 2.50
Ghz.} are influenced by varying $\dTmax$ at the beginning of a feasible
single-support phase. A ``sweet range'' extends from 200 to \SI{500}{\ms}, with
the best (also riskiest) performance obtained close to the threshold. Below
this range, computation times and failure rate increase beyond usable values.
In practice, values of $\dTmax^\mathsf{lim}$ ranged between \SI{100}{\ms} and
\SI{300}{\ms} during the gait cycle, and we chose a uniform setting of $\dTmax
= \SI{350}{\ms}$.

The influence of $\dTmin$ is of a different nature. First, we note that
$\dTmin$ cannot be equal to zero in practice: all integral terms in the cost
function bias solutions toward $\Delta t[k] \to 0$, and we want to preclude
local optima where the receding-horizon duration would be zero. Setting
$\dTmin$ to the control cycle gives good performance in practice. Larger values
further improve computation times at the beginning of single-support phases,
but jeopardize convergence in the middle of the step where $T_\swing$ becomes
small. We dealt with this case by reducing $\dTmin$ on the fly when NMPC
solutions start to overshoot $T_\swing$. The first $N/2$ steps being devoted to
the swing phase, the procedure is:

    \begin{algorithmic}
        \IF{$\sum_{k=0}^{N/2} \Delta t[k] > (1 + \frac14) T_\swing$}
            \STATE $\forall k \leq N / 2,\ \dTmin[k] \leftarrow \dTmin[k] / 2$
        \ENDIF
    \end{algorithmic}

In a motion generation scenario where computation time is abundant, one could
devise global strategies such as a bisection search to tune $\dTmin$ and
$\dTmax$. The heuristic used here is rather a local parameter search spread
over control cycles. It has the benefit of incurring no additional cost.

\begin{figure}[t]
    \centering
    \includegraphics[width=\columnwidth]{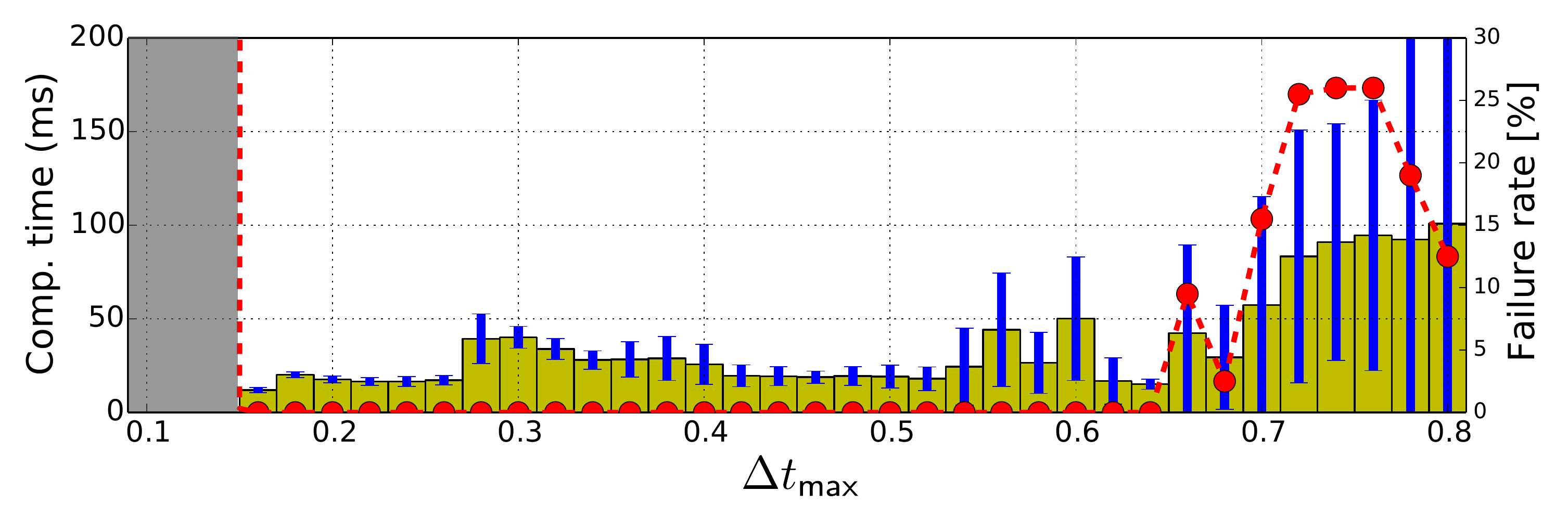}
    \caption{
        Effect of $\dTmax$ on computation times (yellow) and failure rate (red)
        of the NLP solver, for a feasible step and $N=10$ collocation points.
        Each bar includes a standard deviation estimate (blue line) computed
        over 200 runs. Above a minimum value $\dTmax^\mathsf{lim}$, the problem
        becomes feasible and all runs should ideally converge to a solution. In
        practice, computations become unstable for large values of $\dTmax$.
    }
    \label{fig:dTmax}
\end{figure}




\subsection{Failure rate accross the gait cycle}

While computation times look promising, another metric suggests that nonlinear
optimization is not sufficient in itself to solve the NMPC problem: its failure
rate, in our case, is around 40\%. This means that, on average one out of two
to three control cycles, the solver either does not terminate, or returns a
certificate of infeasibility, or converges to an off-track solution. We found
that the terminal capture-point error $\| \bfxi[N] - \bfxi^d_\mathsf{end}\|$ is
a good indicator of this latter case, and chose to discard all solutions where
this error is above 10 cm (a liberal value close to the half-length of HRP-4's
footprint).

\section{Constrained Linear-Quadratic Regulation}
\label{sec:lqr}

To cope with the 40\% of situations where the nonlinear optimization fails to
produce a new solution on time, we design a constrained linear-quadratic
regulator (LQR) that updates the last available trajectory $\bfp_G^d, \pd_G^d,
\bfp_Z^d$ into a new feasible one starting from the current COM state. In order
to cast the regulation problem as a quadratic program, this reference
trajectory is first resampled into $M$ time steps of equal duration $\Delta T$.
Residual states and controls of the FIP are then:
\begin{align}
    \Delta \bfx[k] & \ = \ \colvec{\bfp_G[k] - \bfp_G^d[k] \\ \pd_G[k] - \pd_G^d[k]} \\
    \Delta \bfz[k] & \ = \ \bfp_Z[k] - \bfp_Z^d[k]
\end{align}
The discretized linear dynamics of these residuals are:
\begin{align}
    \label{regu0}
    \Delta \bfx[k + 1] &= \bfA \Delta \bfx[k] + \bfB \Delta \bfz[k] \\
    \bfA &= \left[
        \begin{array}{rr}
            \cosh(\omega \Delta T) \bfE & \sinh(\omega \Delta T) / \omega \bfE
            \\
            \omega \sinh(\omega \Delta T) \bfE & \cosh(\omega \Delta T) \bfE
        \end{array}
        \right] \\
    \bfB &= \left[
        \begin{array}{r}
            (1 - \cosh(\omega \Delta T)) \bfE \\
            -\omega \sinh(\omega \Delta T) \bfE
        \end{array}
        \right]
\end{align}
where $\bfE$ is the $3 \times 3$ identity matrix. Inequality constraints over
$\bfp_G$ and $\bfp_Z$ translate into similar constraints over $\Delta \bfx$ and
$\Delta \bfz$. The friction constraint~\eqref{fineq2h} becomes:
\begin{equation}
    \label{regu1}
    \left[\,\bfC\,\ \bm{0}\,\right] \Delta \bfx[k]
    - \bfC \Delta \bfz[k]
    \ \leq \ 
    \bfC (\bfp_Z^d[k] - \bfp_G^d[k])
\end{equation}
Next, define $\sigma_i[k] \defeq -\vec{V_i V_{i+1}} \cdot (\vec{G^d[k] V_i}
\times \vec{V_i Z^d[k]})$ the positive slackness of the COP constraint on the
$\th{i}$ vertex in the reference trajectory. Expanding~\eqref{fineq1h} yields,
in geometric form (we omit indexes $[k]$ to alleviate notations and write
$\Delta \bfp$ the first three coordinates of $\Delta \bfx$):
\begin{equation}
    \vec{V_i V_{i+1}} \cdot (
    \vec{G^d V_i} \times \Delta \bfz +
    \vec{V_i Z^d} \times \Delta \bfp +
    \Delta \bfz \times \Delta \bfp)
    \ \leq \  \sigma_i[k]
\end{equation}
And in matrix form:
\small
\begin{equation}
    \Delta \bfp[k]^T \bfH_i \Delta \bfz[k] + \bfh_P[k]^T \Delta \bfp[k] +
    \bfh_Z[k]^T \Delta \bfz[k]
    \leq \sigma[k]
\end{equation}
\normalsize
with $\bfH_i$ the cross-product matrix of $\vec{V_i V_{i+1}}$ and
\begin{align*}
    \bfh_P[k] & \ \defeq \ \bfH_i (\bfp^d_Z[k] - \bfp_{V_i}), \\
    \bfh_Z[k] & \ \defeq \ \bfH_i (\bfp_{V_i} - \bfp_G^d[k]).
\end{align*}
At this point, one could put polyhedral bounds on $\Delta \bfz$ or $\Delta
\bfp$ and solve a (bigger) conservative linearized system. This is \eg the
approach followed in~\cite{caron2016humanoids, dai2016humanoids} where COM
trajectories are boxed into user-defined volumes. However, contrary to these
previous works, our problem here applies to \emph{residual} variables, which we
can assume to be small. Intuitively, if $\|\Delta \bfp\| \ll \| \vec{G^d V_i}
\|$ and $\|\Delta \bfz\| \ll \| \vec{V_i Z^d} \|$, then $\|\Delta \bfp \times
\Delta \bfz\|$ should be orders of magnitude smaller than the linear term
$\|\vec{G^d V_i} \times \Delta \bfz + \vec{V_i Z^d} \times \Delta \bfp\|$. We
therefore neglect this residual cross-product, resulting in a linear COP
constraint:
\begin{equation}
    \label{regu2}
    \bfh_P[k]^T \Delta \bfp[k] + \bfh_Z[k]^T \Delta \bfz[k] \ \leq \ \sigma[k]
\end{equation}

After implementing the complete pipeline described so far, we checked the
validity of this assumption down the line. We found that, in the simulation
framework described in the next section (which includes noise and delays in
both control and state estimation) the ratio
\[
    \frac{\|\Delta \bfp \times \Delta \bfz\|}{\|\vec{G^d V_i} \times \Delta
    \bfz + \vec{V_i Z^d} \times \Delta \bfp\|}
\]
is equal on average to $0.005$ with a standard deviation of $0.005$ for 10,000
sampling times corresponding to five minutes of locomotion. That is, the
cross-product term is roughly two orders of magnitude smaller than the linear
one, which \emph{a posteriori} legitimates our assumption.

Coming back to problem formulation, our constrained LQR is finally cast as a
quadratic program with cost function:
\begin{align}
    \underset{\{\Delta \bfz[k]\}}{\mathrm{minimize}}
    \ 
    & \textstyle \sum_{k=0}^{M-1} \left(w_{xc} \| \Delta \bfx[k] \|^2 + w_z \|
    \Delta \bfz[k]\|^2\right) \nonumber \\
    & + w_{xt} \| \Delta \bfx[M] \|^2 \\
    \textrm{subject to} \ & \forall k,
    \eqref{regu0} \wedge
    \eqref{regu1} \wedge
    \eqref{regu2}
\end{align}
We solve this problem using the classical single-shooting formulation described
in \eg \cite{audren2014iros} and implemented by the
\emph{Copra}\footnote{\url{https://github.com/vsamy/Copra}} library. In
experiments, we set the terminal weight to $w_{xt}=1$ and the cumulative
weights to $w_{xc}=w_{z}=10^{-3}$.

\begin{figure}[t]
    \centering
    \includegraphics[width=\columnwidth]{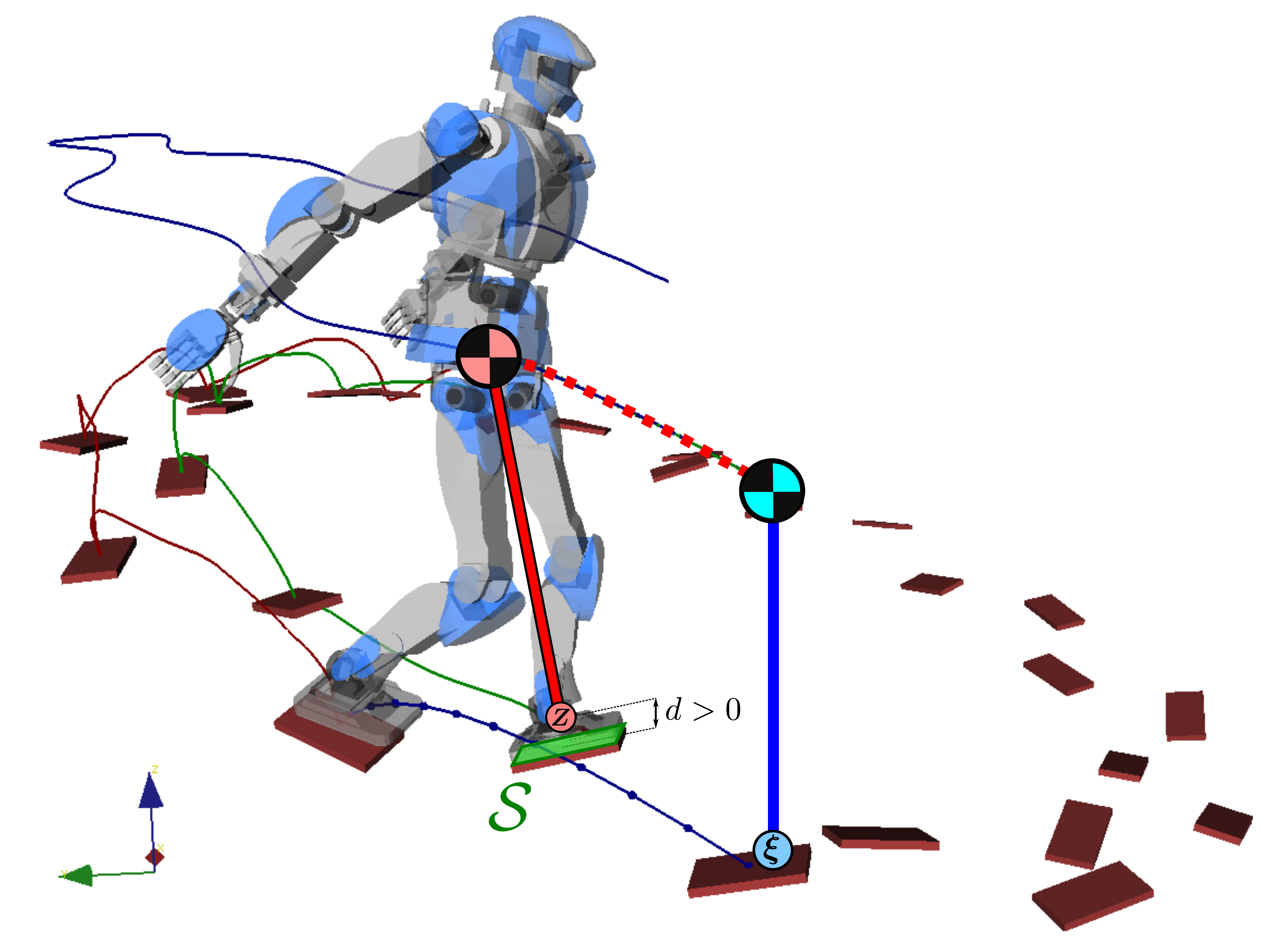}
    \caption{
        \textbf{Walking pattern generation over an elliptic staircase with tilted
        steps.} At each control cycle, a new trajectory (dotted line) is
        computed via nonlinear optimization for the Floating-base Inverted
        Pendulum (red: current state, blue: desired state at the end of the
        receding horizon). In this model, the ZMP $Z$ can leave the surface
        patch $\calS$ and the COM can move freely in 3D while keeping linear
        equations of motion.
    }
    \label{fig:downstairs}
\end{figure}

\section{Simulations}
\label{simus}

We validated the proposed method in simulation with a model of the HRP-4
humanoid robot. Our benchmark test is a randomly-generated elliptic staircase
that includes all the characteristics that we deem important for rough-terrain
locomotion: going up, forward and down using tilted contacts (no two contacts
are coplanar). Our simulations use
\emph{pymanoid}\footnote{\url{https://github.com/stephane-caron/pymanoid}}, an
extension of OpenRAVE for humanoid robotics. Compared to the results reported
in~\cite{caron2016humanoids}, these new simulations model both noise and delay
in ZMP control and COM estimation:
\begin{itemize}
    \item \textbf{COM state estimation:} zero-mean noise with amplitudes of
        \SI{10}{\cm\per\second} on position and \SI{10}{\cm\per\second\squared}
        on velocity. Nominal delay is set to \SI{20}{\ms}.
    \item \textbf{Ankle ZMP control:} zero-mean noise with amplitude
        \SI{10}{\mm\per\ms}. Control delay, \ie the characteristic duration
        before a new command is achieved, is set to \SI{20}{\ms}.
\end{itemize}

Pattern generation is supervised by a finite state machine that alternates
single and double support phases. In double support, the conservative
multi-contact controller from~\cite{caron2016humanoids} is used with the
current step target as terminal condition. When the NMPC
(Section~\ref{sec:nmpc}) running in the background finds a trajectory
traversing the next step, the state machine switches to the next swing phase
(single support).

In single support, the multi-contact controller is replaced by the LQR from
Section~\ref{sec:lqr}. When the NMPC successfully finds a new solution, usually
with a delay between 0 and 3 control cycles, this trajectory is resampled and
sent to the LQR as a new reference. The LQR then produces an updated trajectory
which is sent to the whole-body inverse kinematics and foot ZMP controllers.
The numbers of NMPC and LQR steps are set respectively to $N=10$ and $M=30$.
With this design, our pattern generator is able to locomote the humanoid
accross the elliptic staircase depicted in Figure~\ref{fig:downstairs}.
However, when disabling the linear-quadratic regulator, the robot only walks a
couple of steps before NMPC numerically unstabilities make it unable to recover
from perturbations. Sample outcomes are shown in the accompanying videos. These
results can be reproduced using the source code~\cite{code}.

One important aspect in these simulations is that they perform an independent
check of contact-wrench feasibility at every time step. Indeed, as mentioned in
Section~\ref{sec:nmpc}, constraints are only enforced at collocation points.
Optimal solutions may then violate constraints in between these points, and
additional validation is needed to make sure that this does not happen. This
point is particularly critical in our NMPC where we use a small number of
variable-duration steps, and all the more justifies the addition of an LQR with
finer discretization.

\begin{figure}[t]
    \centering
    \includegraphics[width=\columnwidth]{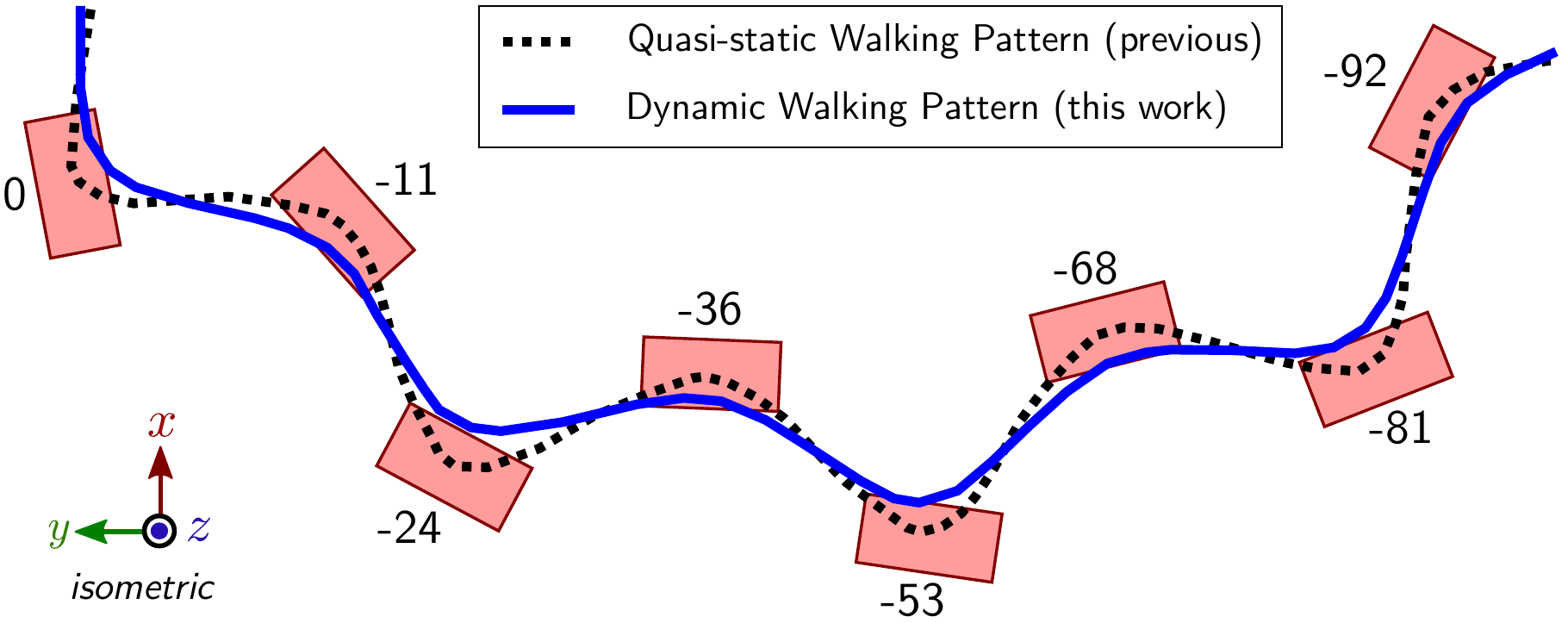}
    \caption{
        Difference in COM trajectories between this work (blue line) and our
        previous multi-contact walking pattern
        generator~\cite{caron2016humanoids} (black dotted line). Note that the
        perspective is \emph{isometric}, not linear. Footholds correspond to
        the downward part of the elliptic staircase depicted in
        Figure~\ref{fig:downstairs}. Numbers next to them indicate their
        altitude in~cm. The new trajectory is dynamic as the COM goes only
        marginally over the edges of the footholds, as opposed to the
        quasi-static one where it nears the vertical of foothold centers.
    }
    \label{fig:steps}
\end{figure}

\begin{table}[th]
    \caption{
        Performance of the NMPC and LQR controllers over two full cycles on the
        elliptic staircase.
    }
    \label{table:times}
    \centering
    \begin{tabular}{rrrc}
        Function & \# Calls & \# Successes & Time (ms) \\
        \hline
        Build NMPC &  115 &  115 & $25 \pm 8.5$ \\
        Solve NMPC & 2000 & 1452 & $21 \pm 11$ \\
        Build LQR  & 1975 & 1975 & $1.9 \pm 0.2$ \\
        Solve LQR  & 1975 & 1975 & $1.0 \pm 0.4$ \\
    \end{tabular}
\end{table}

Table~\ref{table:times} reports computation times for both the NMPC and LQR
controllers over two full cycles on the elliptic staircase. Building NMPC
problems only occurs around contact switches, the same nonlinear problem
structure being otherwise re-used between control cycles. In this scenario, the
robot called the double-support controller roughly once every two steps to
handle the extra control cycles needed by the NMPC to complete its
computations.

\section{Conclusion}

We presented a real-time rough-terrain walking pattern generator that is able
to adjust its step timings automatically. Our solution rests upon the
floating-base inverted pendulum, a model with linear equations of motion and
where contact stability can be checked using simple geometric constructions. We
developed a nonlinear predictive controller that computes feedforward walking
trajectories at roughly 30~Hz, as well as a constrained linear-quadratic
regulator computing feedback controls one order of magnitude faster. The source
code to reproduce this work is released at~\cite{code}.

\bibliographystyle{IEEEtran}
\bibliography{refs}

\appendix

\subsection{Proof of Proposition~\ref{ipm-cons}}
\label{proof-app}

In this Appendix, all coordinates are taken with respect to the local contact
frame. The analytical formula of the contact wrench cone at the origin $O$ of
this frame is given by~\cite{caron2015icra}:
\begin{equation}
    \label{W1}
    |f^x| \leq \mu f^z,\ |f^y| \leq \mu f^z
\end{equation}
\begin{equation}
    \label{W2}
    |\tau_O^x| \leq Y f^z,\ |\tau_O^y| \leq X f^z 
\end{equation}
\begin{equation}
    \label{W3}
    \tau_\mathsf{min} \leq \tau_O^z \leq \ \tau_\mathsf{max} 
\end{equation}
where the lower and upper bounds on yaw torque are:
\begin{align*}
    \tau_\mathsf{min} &\defeq -\mu (X+Y) f^z + |Y f^x - \mu \tau_O^x| + |X
    f^y - \mu \tau_O^y| \\
    \tau_\mathsf{max} &\defeq +\mu (X+Y) f^z - |Y f^x + \mu \tau_O^x| - |X
    f^y + \mu \tau_O^y|
\end{align*}
In the pendulum mode, the contact wrench is equal to:
\begin{equation*}
    \begin{split}
        f^x \ &=\ \lambda (x_G - x_C) \\
        f^y \ &=\ \lambda (y_G - y_C) \\
        f^z \ &=\ \lambda z_G 
    \end{split}
    \quad
    \quad
    \begin{split}
        \tau_O^x \ &=\ y_C f^z \\
        \tau_O^y \ &=\ -x_C f^z \\
        \tau_O^z \ &=\ x_C f^y - y_C f^x
    \end{split}
\end{equation*}
Injecting these equations into \eqref{W1}--\eqref{W3} yields two sets of
equations (we used a symbolic calculator to avoid painstaking hand calculations
here). First,
\begin{equation}
    |x_C| \leq X,\ |y_C| \leq Y \label{Z1}
\end{equation}
\begin{equation}
    |x_G - x_C| \leq \mu z_G,\ |y_G - y_C| \leq \mu z_G \label{Z2}
\end{equation}
And second, after rearranging all terms suitably:
\scriptsize
\begin{align*}
    0 & \leq (X + x_C) (\mu z_G - (y_C - y_G)) + (Y + y_C) (\mu z_G + (x_C - x_G)) \\
    0 & \leq (X + x_C) (\mu z_G + (y_C - y_G)) + (Y + y_C) (\mu z_G - (x_C - x_G)) \\
    0 & \leq (X - x_C) (\mu z_G - (y_C - y_G)) + (Y + y_C) (\mu z_G - (x_C - x_G)) \\
    0 & \leq (X - x_C) (\mu z_G + (y_C - y_G)) + (Y + y_C) (\mu z_G + (x_C - x_G)) \\
    0 & \leq (X - x_C) (\mu z_G + (y_C - y_G)) + (Y - y_C) (\mu z_G - (x_C - x_G)) \\
    0 & \leq (X + x_C) (\mu z_G - (y_C - y_G)) + (Y - y_C) (\mu z_G - (x_C - x_G)) \\
    0 & \leq (X + x_C) (\mu z_G + (y_C - y_G)) + (Y - y_C) (\mu z_G + (x_C - x_G)) \\
    0 & \leq (X - x_C) (\mu z_G - (y_C - y_G)) + (Y - y_C) (\mu z_G + (x_C - x_G))
\end{align*}
\normalsize
All right-hand side terms in this second set can be writen as $a b + c d$,
where $a,b,c,d$ are positive slackness variables from the first set of
inequalities~\eqref{Z1}--\eqref{Z2}. Therefore, all constraints in the second
set are redundant, and the contact wrench cone in irreducible form is given
by~\eqref{Z1}--\eqref{Z2}. We conclude by noting how \eqref{Z1} corresponds to
$\bfp_C \in \calS$ while \eqref{Z2} represents $\bfp_G \in \bfp_C + \calC$.

\subsection{Support Volumes for Virtual Repulsors and Attractors}
\label{attractors}

Let $\bfA_O$ denote the inequality matrix of the contact wrench cone taken with
respect to a fixed point $O$. In the pendulum mode, contact stability can be
written~\cite{caron2016humanoids} in terms of the position and acceleration of
the COM as
\begin{equation}
    \label{cstabi}
    (\bfa + \bfa_O \times \bfp_G) \cdot (\pdd_G - \bfg) \ \leq \ 0
\end{equation}
over all rows $(\bfa, \bfa_O)$ of the inequality matrix $\bfA_O$. These
expressions are bilinear and not positive-semidefinite in general, which
precludes their direct use with \eg convex optimization. There is however one
interesting setting where these inequalities linearize without loss of
generality:
\begin{proposition}
    If the COM control law follows a proportional attractor or repulsor $H$
    with stiffness $k \in \mathbb{R}$, that is
    \begin{equation}
        \label{law}
        \pdd_G \ = \ k (\bfp_H - \bfp_G),
    \end{equation}
    then the set of contact-stable positions $\bfp_G$ is a polyhedral cone
    rooted at the apex $\bfnu := \bfp_H - \bfg / k$.
\end{proposition}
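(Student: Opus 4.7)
The plan is to substitute the control law directly into the bilinear contact-stability inequality~\eqref{cstabi} and show that the residual bilinear term cancels, leaving linear inequalities whose common intersection is a polyhedral cone with apex $\bfnu$.

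First, I would rewrite $\pdd_G - \bfg$ using the definition of $\bfnu$. Since $\bfnu = \bfp_H - \bfg/k$, the control law~\eqref{law} gives
\begin{equation*}
\pdd_G - \bfg \ = \ k(\bfp_H - \bfp_G) - \bfg \ = \ -k(\bfp_G - \bfnu).
\end{equation*}
Plugging this into~\eqref{cstabi}, every row $(\bfa, \bfa_O)$ of $\bfA_O$ yields the inequality
\begin{equation*}
-k\,(\bfa + \bfa_O \times \bfp_G) \cdot (\bfp_G - \bfnu) \ \leq \ 0.
\end{equation*}

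Next I would translate the origin to the apex. Writing $\bfr \defeq \bfp_G - \bfnu$, decompose the left factor as
\begin{equation*}
\bfa + \bfa_O \times \bfp_G \ = \ \underbrace{(\bfa + \bfa_O \times \bfnu)}_{\defeq\,\bfa'} + \bfa_O \times \bfr.
\end{equation*}
The key algebraic step is the identity $(\bfa_O \times \bfr) \cdot \bfr = 0$, which kills the troublesome bilinear term. What remains is
\begin{equation*}
-k \, \bfa' \cdot \bfr \ \leq \ 0,
\end{equation*}
which is linear in $\bfr = \bfp_G - \bfnu$. The sign of $k$ merely flips the orientation of each halfspace (attractor versus repulsor) but does not affect the argument.

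Each row of $\bfA_O$ therefore defines a halfspace whose bounding hyperplane passes through $\bfnu$, and the set of contact-stable COM positions is the intersection of finitely many such halfspaces, i.e.\ a polyhedral cone rooted at $\bfnu$. The only delicate point is verifying that the cross-product cancellation works row-by-row and that the constant shift $\bfa_O \times \bfnu$ does not reintroduce a nonzero offset; both are immediate once the decomposition above is written out. No step requires approximation, so the linearization is exact, which is the content of the proposition.
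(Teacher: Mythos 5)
Your proof is correct and is essentially the paper's own argument: both substitute the control law to get $-k(\bfa + \bfa_O \times \bfp_G)\cdot(\bfp_G - \bfnu) \leq 0$, split $\bfp_G$ as $(\bfp_G - \bfnu) + \bfnu$, and invoke the scalar-triple-product identity $(\bfa_O \times \bfr)\cdot\bfr = 0$ to kill the bilinear term, leaving linear halfspaces through $\bfnu$ whose intersection is a polyhedral cone with that apex. Nothing to add.
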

\begin{proof}
    Injecting the control law~\eqref{law} into \eqref{cstabi} yields:
    \begin{align}
        k (\bfa + \bfa_O \times \bfp_G) \cdot (\bfp_H - \bfp_G - \bfg / k)
        & \ \leq \ 0
    \end{align}
    Defining $\bfnu := \bfp_H - \bfg / k$, this expression expands to:
    \begin{align}
        -k (\bfa + \bfa_O \times (\bfp_G - \bfnu + \bfnu)) \cdot (\bfp_G -
        \bfnu) & \ \leq \ 0 \\
        \label{Pineq} -k (\bfa + \bfa_O \times \bfnu) \cdot (\bfp_G - \bfnu) &
        \ \leq \ 0
    \end{align}
    using the fact that $(\bfa \times \bfb) \cdot \bfb = 0$ (the scalar triple
    product is a Gram determinant). We recognize the expression of a linear
    polyhedral cone in $\bfp_G$ with apex $\bfnu$.
\end{proof}

This property applies to the following two cases:
\begin{itemize}
    \item \textbf{Virtual Repellent Point:} $k = -\omega^2 < 0$ and $H$ is the
        VRP defined by Englsberger et al.~\cite{englsberger2015tro}. Then,
        Equation~\eqref{Pineq} defines the cone $\calC_\mathsf{VRP}(H,
        \omega^2)$ of sustainable COM positions when the VRP is located at
        $\bfp_H$.
    \item \textbf{Virtual Attractive Point:} $k > 0$ and $\bfp_H = \bfp_G^d$ is a
        desired COM location. In this case, Equation~\eqref{Pineq} defines the
        cone of COM positions that can be steered toward $\bfp_G^d$ for a given
        stiffness $k$.
\end{itemize}
Stabilizing the COM around a reference position $\bfp_G^d$ requires variable
VRPs in the first approach and variable stiffness with the second one.

\end{document}